\documentclass[10pt]{article}
\usepackage[letterpaper,left=0.78in,right=0.78in,top=0.55in,bottom=0.7in,headheight=12pt,headsep=12pt]{geometry}
\usepackage[T1]{fontenc}
\usepackage[utf8]{inputenc}
\usepackage{lmodern,microtype,amsmath,amssymb,amsthm,mathtools}
\usepackage{booktabs,tabularx,array,fancyhdr,titlesec,xcolor,graphicx}
\usepackage{tikz}
\usetikzlibrary{arrows.meta,positioning}
\usepackage[hidelinks]{hyperref}
\definecolor{navy}{RGB}{25,58,82}
\definecolor{warm}{RGB}{252,246,239}
\definecolor{pale}{RGB}{246,248,250}
\titleformat{\section}{\large\bfseries\color{navy}}{\thesection}{0.65em}{}
\titleformat{\subsection}{\normalsize\bfseries\color{navy}}{\thesubsection}{0.65em}{}
\titlespacing*{\section}{0pt}{12pt}{5pt}
\titlespacing*{\subsection}{0pt}{8pt}{3pt}
\newtheorem{theorem}{Theorem}
\newcommand{\op}[1]{\textsf{#1}}
\newcommand{\fv}{\operatorname{fv}}
\newcolumntype{Y}{>{\raggedright\arraybackslash}X}
\newcommand{\callout}[1]{\begin{center}\fcolorbox{navy}{warm}{\begin{minipage}{0.94\linewidth}#1\end{minipage}}\end{center}}
\hypersetup{pdftitle={From Question to Evidence: A Small Analytical Algebra for Governed Data Analysis},pdfauthor={Matt Bray; Logan Green; Hemant Jomraj; Shardul Pande; Viktoria Rojkova; MasterControl AI Lab},pdfsubject={Expressiveness, deterministic policy execution, and a controlled comparison with runtime tool planning},pdfkeywords={relational algebra, governed analytics, deterministic execution, tool-using agents, SQL agents, evidence, policy}}
\title{\vspace{-0.8em}\textbf{From Question to Evidence: A Small Analytical Algebra for Governed Data Analysis}\\[3pt]\large Expressiveness, Deterministic Policy Execution, and a Controlled Comparison with Runtime Tool Planning}
\author{Matt Bray, Logan Green, Hemant Jomraj, Shardul Pande, Viktoria Rojkova\\[2pt]\large MasterControl AI Lab}
\date{September 2026}
\begin{document}
\maketitle
\vspace{-3.2em}
\begin{abstract}
A factual analyzer should not invent a new measuring method each time it answers a question. The difficulty is not generating a number; it is preserving what the number means: the population, time interval, measure, definitions, and supporting records. We study a simple architecture for bounded, read-only enterprise analytics. A language model may interpret the user's wording into governed meaning, but a deterministic policy selects a pre-written analytical program. The program is built from a small set of typed operations and returns both the result and its evidence.

The technical question is whether such restriction sacrifices analytical power. Starting from finite relations and first-order satisfaction, we give a constructive translation into relational operations. We then add explicit aggregation, comparison, windows, ranking, and versioned similarity to cover a stated analytical class. The result is scoped completeness: every specification in that class has an exact finite program, although the language is not a general-purpose programming system and a deployed policy catalog need not cover every possible question. We also prove replay: fixed governed meaning, policy, data, kernel state, numerical rules, and output contract imply the same result and evidence.

We compare two ways of answering the same analytical questions across 440 runs. In the first, three independent 8B open-source models decide how to query the data, what tools to use, and generate the SQL procedure at runtime. In the second, Qwen3-8B only interprets what the user is asking; after that, a deterministic policy selects and executes the pre-approved analytical program corresponding to that interpretation. Across 330 runtime-planning episodes, no final procedure matched the full answer-and-evidence contract on the development snapshot and four held-out variants. The policy-executed analyzer matched the contract in 110 of 110 episodes. In this specific setup, runtime tool planning did not perform successfully on the primary metric. The result is empirical and configuration-specific; it is not a proof that agents cannot succeed under other search, verification, model, or tool designs.
\end{abstract}

\section{The problem: 17 is not just a number}
Consider a question that should be almost boring:
\callout{\centering\textbf{How many approved temperature-excursion deviations occurred last week?}}
Suppose the answer is 17. Producing 17 once is easy. Preserving what 17 means is harder. ``Last week'' could mean a calendar week or trailing seven days. ``Temperature excursion'' could mean a controlled category or semantic similarity in free text. A join can count deviation records or observation rows. Voided records may or may not qualify. Two systems can print the same number while measuring different populations.

For governed analysis, we therefore treat the answer as a structured contract rather than a scalar:
\begin{equation}R=(m,v,E),\label{eq:contract}\end{equation}
where \(m\) is the accepted governed meaning, \(v\) is the analytical value or table, and \(E\) is the role-labeled supporting evidence. An execution certificate separately records the data snapshot, policy version, program version, kernel versions, and serialization rules.

This leads to a simple architectural boundary:
\begin{center}\textbf{The model may interpret the question. It does not choose the analytical method.}\end{center}
The distinction matters because an agent that is free to choose tools, joins, filters, sequence, and stopping rule is not merely calculating an answer. It is designing an analytical procedure at request time.

\begin{figure}[t]
\centering
\begin{tikzpicture}[node distance=3mm,box/.style={draw=navy,fill=pale,rounded corners=2pt,align=center,text width=2cm,minimum height=0.66cm,font=\small},>=Stealth]
\node[box] (a) {Natural-\\language\\question};
\node[box,right=of a] (b) {Semantic\\interpretation};
\node[box,right=of b] (c) {Canonical\\request};
\node[box,right=of c] (d) {Deterministic\\policy};
\node[box,right=of d] (e) {Pre-written\\program};
\node[box,right=of e] (f) {Result +\\evidence};
\foreach \x/\y in {a/b,b/c,c/d,d/e,e/f} \draw[->,gray] (\x)--(\y);
\node[below=4mm of b,font=\scriptsize,text=gray] {probabilistic boundary};
\node[below=4mm of d,font=\scriptsize,text=teal] {governed analytical core};
\end{tikzpicture}
\caption{The proposed boundary. Language remains flexible at the interface; the analytical method is fixed by policy and versioned code.}\label{fig:architecture}
\end{figure}

\section{A small analytical language}
The proposed execution language has eight practical families. The names are less important than their contracts: each operation has typed inputs, typed outputs, declared parameters, error behavior, and evidence semantics.
\begin{center}\small
\begin{tabularx}{\linewidth}{@{}p{1.0in}p{2.35in}Y@{}}
\toprule
\textbf{Family}&\textbf{What it fixes}&\textbf{Typical question}\\\midrule
\op{RESTRICT}&Which records qualify?&Temperature excursions from the approved population.\\
\op{SHAPE}&Which fields and deterministic derivations are returned?&Express duration in hours; retain source identifiers.\\
\op{RELATE}&How are governed relations combined?&Link deviations to sites, batches, equipment, or CAPAs.\\
\op{AGGREGATE}&At what grain is a measure calculated?&Count distinct deviations by site.\\
\op{COMPARE}&What is the baseline or denominator?&Compare a site rate with the network baseline.\\
\op{WINDOW}&Which time or ordered frame applies?&Compare this week with the previous week.\\
\op{RANK}&What total order and tie rule apply?&Return the top contributing sites.\\
\op{SIMILAR}&Which versioned semantic or numeric neighborhood applies?&Group free-text causes by a fixed similarity rule.\\
\bottomrule
\end{tabularx}\end{center}
The running count can be implemented by a reviewed program such as
\begin{center}\footnotesize\ttfamily
RESTRICT[approved population] \(\to\) WINDOW[previous calendar week] \(\to\)\\
RESTRICT[temperature-excursion definition] \(\to\) AGGREGATE[count distinct deviation\_id] \(\to\) SHAPE[result + evidence].
\end{center}
This is not a plan generated by the model. It is a program authored and tested before deployment, or deterministically expanded from approved policy rules. A different supported question maps to a different approved program.

The practical question is whether this small language can be broad enough. Database theory gives the right analogy: relational algebra is restrictive, yet it is complete for an important class of database queries. ``Complete'' here never means universal computation; it means complete relative to a stated query class.

\section{From relational logic to analytical programs}
\subsection{Relational foundation}
Let a finite database \(D\) assign a finite set of tuples to each relation in a typed schema. For each sort \(s\), let \(A_s\) be a declared finite candidate domain. A relational question is specified independently of our primitives by a first-order formula \(\varphi\):
\begin{equation}q_\varphi(D)=\{\bar a\in A_{s_1}\times\cdots\times A_{s_k}:D\models\varphi(\bar a)\}.\label{eq:spec}\end{equation}
The formula may use relational atoms, equality and declared comparisons, Boolean connectives, and quantification over the finite domains. The definition says what the answer is; it says nothing about how to execute it.

The relational core uses selection \(\sigma\), projection \(\pi\), renaming \(\rho\), product \(\times\), union \(\cup\), and difference \(\setminus\). These operations correspond naturally to logical constructions. Selection implements row-local predicates. Projection removes variables and therefore implements existential quantification. Union implements disjunction. Difference from a declared candidate universe implements negation. Universal quantification is implemented by removing assignments that have a counterexample.
\begin{theorem}[Constructive relational expressiveness]\label{thm:relational}
For every finite-domain first-order specification \(q_\varphi\), there is a finite program built from \op{RESTRICT}, \op{SHAPE}, and \op{RELATE} whose result equals \(q_\varphi(D)\) for every admissible database \(D\). Conversely, every program in this relational core has an equivalent finite-domain first-order specification.
\end{theorem}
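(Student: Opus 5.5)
The forward direction goes by structural induction on \(\varphi\), with a stronger invariant than the statement itself. For every subformula \(\psi\) with free variables \(\fv(\psi)=\{x_1,\dots,x_n\}\), listed in a fixed order, I will build a program \(P_\psi\) such that \(P_\psi(D)\) is exactly the set of assignments \(\bar a\in A_{s_1}\times\cdots\times A_{s_n}\) with \(D\models\psi(\bar a)\). This is the relation of all \emph{satisfying assignments over the declared candidate domains}. Keeping every intermediate result inside the product of the declared domains is what makes negation safe. First I realize each finite domain \(A_s\) as a constant relation, or as a finite union of singletons, available to \op{SHAPE}/\op{RELATE}. The candidate universe \(U_X=\prod_{x\in X}A_{s(x)}\) for any variable set \(X\) is then a product of these. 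At the end the target \(q_\varphi\) is recovered by one \op{SHAPE} (projection and renaming) that puts the columns in the order \(\bar x\).

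The cases go as follows.
\begin{itemize}
\item \emph{Relational atom} \(R(t_1,\dots,t_m)\). Take \(R\), use \op{RESTRICT} to enforce equalities from repeated variables or constants, then project and rename to one column per variable. Finally intersect with \(U_{\fv}\), written as \(R'\setminus(R'\setminus U)\) or as a product-then-select. This keeps the invariant even if \(D\) stores values outside the declared domains.
\item \emph{Equality or declared comparison} \(x\theta y\). Apply \op{RESTRICT} \(\sigma_{x\theta y}\) to \(U_{\{x,y\}}\).
\item \emph{Conjunction} \(\psi_1\wedge\psi_2\). This is a natural join: \op{RELATE} takes the product after renaming shared variables apart, then \op{RESTRICT} equates the copies, then \op{SHAPE} drops the duplicates.
\item \emph{Disjunction}. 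First pad each side to the common variable set \(X=\fv(\psi_1)\cup\fv(\psi_2)\) by taking the product with \(U_{X\setminus\fv(\psi_i)}\). Then take the union.
\item \emph{Negation}. \(U_{\fv(\psi)}\setminus P_\psi\).
\item \emph{Existential} \(\exists x\,\psi\). Project \(x\) away. If \(x\notin\fv(\psi)\), existence depends on whether \(A_{s(x)}\) is nonempty, which a product with \(\pi_\emptyset(A_{s(x)})\) handles.
\item \emph{Universal}. Rewrite as \(\neg\exists\neg\), which gives exactly the ``remove assignments with a counterexample'' construction.
\end{itemize}
Each case preserves the invariant by a one-line set computation, and the program is finite because the induction is on a finite formula.

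The converse is a structural induction on programs. I assign to each relational-core program \(P\) with output columns \(c_1,\dots,c_k\) a formula \(\varphi_P(c_1,\dots,c_k)\) with \(P(D)=q_{\varphi_P}(D)\). The translations are: base relation \(R\) becomes \(R(\bar c)\); selection becomes conjunction with the condition; projection becomes \(\exists\) over the dropped columns; renaming becomes variable substitution; product becomes conjunction over disjoint variables; union becomes disjunction; difference becomes \(\varphi_{P_1}\wedge\neg\varphi_{P_2}\); constant domain relations become finite disjunctions of equalities.

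The main obstacle is making the converse honest under the active-versus-candidate domain semantics of \eqref{eq:spec}. The quantity \(q_\varphi\) ranges only over \(A_{s_1}\times\cdots\times A_{s_k}\), while a program like a bare base relation can output tuples outside the declared domains. I will handle this by fixing the admissibility condition on \(D\): every stored value of sort \(s\) lies in \(A_s\). This is what ``admissible database'' should mean, and I will state it explicitly. Under that condition every program's output lies in the candidate product, and the two semantics agree.

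The other point to check carefully is that \(\setminus\) and \(\cup\) are only applied to union-compatible schemas. This is why the padding step is needed in the disjunction case, and why the translation maintains an explicit, ordered variable-to-column correspondence.
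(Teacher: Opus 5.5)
Your proposal is correct and follows essentially the paper's route. It uses structural induction in which negation is difference from the candidate universe, \(\exists\) is projection, \(\forall\) is the counterexample-removal \(U\setminus\pi\bigl(U\setminus T\bigr)\) (your \(\neg\exists\neg\) yields exactly this), and the converse is induction over relational expressions. The differences are bookkeeping: the paper translates every subformula relative to an ambient variable set \(X\supseteq\fv(\varphi)\), so union and difference are automatically schema-compatible and conjunction becomes \(T_X(\varphi)\setminus(T_X(\varphi)\setminus T_X(\psi))\) rather than your natural join and padding, while your explicit admissibility condition (stored values of sort \(s\) lie in \(A_s\)) makes precise a point the paper's converse leaves implicit.
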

\textit{Proof idea.} For a set of variables \(X\), form the finite assignment relation \(U_X\) from their declared domains. Translate each formula recursively: atoms become selections over renamed source relations, negation becomes \(U_X\setminus T_X(\varphi)\), disjunction becomes union, and existential quantification becomes projection. Universal quantification removes assignments having a counterexample. Structural induction shows that a tuple belongs to the translated relation exactly when it satisfies the original formula. The reverse direction follows by induction over relational expressions. The full construction is in Appendix~\ref{app:proofs}.

Two boundaries are important. First, difference or an equivalent non-monotone operation is needed for absence questions such as ``sites with no deviation.'' Second, pure relational operations do not generate a variable-size count as a new scalar. Counting therefore requires an explicit analytical extension.

\subsection{Analytical extensions}
Let \(K\) be a finite registry of declared analytical kernels. In addition to relational stages, a specification may: apply a registered scalar derivation; partition a population and aggregate; construct a time/order window; compare aligned measures; rank under a total order; or apply a fixed proximity function or immutable score table. These constructions have mathematical definitions independent of the eight family names.
\begin{theorem}[Completeness for the stated analytical class]\label{thm:analytical}
If the registry implements the declared constructions with their stated semantics, every finite acyclic specification in this analytical class has a finite well-typed program over the eight families that returns the same value, evidence, or declared domain error on every admissible input.
\end{theorem}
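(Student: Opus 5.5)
The plan is structural induction over the finite acyclic specification, treated as a directed acyclic graph of constructions whose leaves are relational subformulas and whose internal nodes are the declared analytical constructions. First I fix a topological order of the nodes. Acyclicity guarantees one exists, and it gives a well-founded measure, namely the position in that order, on which the induction runs. The invariant I carry is this: for every node \(n\) there is a finite well-typed program \(P_n\) over the eight families such that, for every admissible \(D\), \(P_n(D)\) equals the specified output of \(n\). That output is either a typed relation carrying role-labeled evidence, or the declared domain error. The base case is exactly Theorem~\ref{thm:relational}. Every relational leaf \(q_\varphi\) already has a program over \op{RESTRICT}, \op{SHAPE}, and \op{RELATE}, and I extend that construction so each output tuple also carries the identifiers of the source tuples that witnessed it. Evidence for a selection or product is the witnessing source rows. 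For a projection it is the union of witnesses over the projected-away assignments. For a difference it is the candidate assignment together with a record that no counterexample exists, which is what the declared evidence semantics should mean for absence.

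For the inductive step I handle one construction at a time, assuming the registry hypothesis: each kernel in \(K\) implements its declared semantics. A registered scalar derivation becomes a \op{SHAPE} stage that calls the kernel on each row. Partition-and-aggregate becomes \op{AGGREGATE} keyed on the partition attributes, with the evidence of each group being the union of its members' evidence. Time and order windows become \op{WINDOW}. Aligned comparison becomes a \op{RELATE} on the alignment key followed by a \op{COMPARE} that applies the registered comparison kernel. Ranking becomes \op{RANK} under the declared total order and tie rule. Proximity functions or immutable score tables become \op{SIMILAR}, with the kernel version pinned. Each step is correct by two facts together: the induction hypothesis says the program's input equals the specified input, and the registry hypothesis says the stage computes the specified function. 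Composing programs along the topological order then gives \(P_{\text{root}}\). Finiteness is immediate, since each node contributes finitely many stages and there are finitely many nodes. Well-typedness follows because each family's declared input and output types match the typing of the corresponding construction. Shared subspecifications, where the DAG has fan-out, are handled by reusing the same subprogram output through \op{RELATE}.

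Domain errors get a separate clause in the invariant. If a node's specification yields the declared error on \(D\), for example division by a zero denominator in \op{COMPARE} or aggregation over an undefined grain, then \(P_n\) must yield the same error. In addition, each stage must propagate an incoming error unchanged. I would make this a strictness convention, so that every family is error-strict. With that convention, agreement on errors follows by the same induction, and the first error raised in topological order is the one reported, matching the specification provided the specification is also strict.

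I expect the main obstacle to be evidence preservation rather than value preservation, especially through non-monotone steps and through ranking ties. Relational difference and universal quantification have no positive witnesses. The declared evidence semantics therefore has to be stated so that the translated program produces identical evidence, not merely some valid evidence. My first attempt would be to define evidence semantics compositionally on the specification side by the same clauses I use for the program side, so that equality becomes a per-construction check. Where that fails, for instance when an optimized join order would change which witnesses are recorded, I would require canonical witness sets, meaning all witnesses sorted by a fixed key, so that evidence becomes a function of the value and the source data alone.
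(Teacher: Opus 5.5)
Your skeleton matches the paper's proof: induction in dependency (topological) order, Theorem~\ref{thm:relational} for relational stages, one family per declared construction, and the registry hypothesis to turn equal inputs into equal outputs. Two of your refinements, however, need repair.

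\textbf{Relational stages are not only leaves.} The class composes stages acyclically in any order. The paper's own running example applies \op{RESTRICT} after \op{WINDOW}. HAVING-style filters on aggregates and top-$k$ cuts after \op{RANK} cannot be pushed below the analytical node. Your case analysis therefore misses the inductive case of a relational stage whose inputs are earlier outputs. Your model is also inconsistent with itself here, since you already use \op{RELATE} on internal outputs for \op{COMPARE}. The fix is to apply Theorem~\ref{thm:relational} with those outputs as the input instance. This is legitimate because the theorem's translation is uniform over every admissible database, and by the induction hypothesis that instance is the specified one.

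\textbf{Your error clause proves less than the theorem.} You get agreement only ``provided the specification is also strict''. Imposing strictness on every family can also conflict with the registry hypothesis if a declared construction handles an erroneous input non-strictly. In addition, ``the first error raised in topological order'' depends on the schedule when two independent branches both fail, because topological orders are not unique. The paper's replay argument instead requires that legal scheduling differences cannot alter the sinks.

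\textbf{How the paper's sketch avoids both.} It translates every relational stage by Theorem~\ref{thm:relational} wherever that stage sits in the dependency order. It also treats declared errors, and evidence, as part of each stage's output. So ``the next declared kernel receives the same inputs and returns the same output'' covers value, evidence and error in one step. The node's declared semantics, not the schedule, decides what happens when several inputs fail.

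\textbf{On evidence.} You are right that Theorem~\ref{thm:relational} by itself certifies only tuple membership, and the paper's sketch passes over this. Your provenance layer is an addition rather than the paper's route. If you keep it, note that the appendix encodes conjunction as $T_X(\varphi)\setminus(T_X(\varphi)\setminus T_X(\psi))$. Under your rule, a difference contributes only a ``no counterexample'' record from its right operand, so the witnesses for $\psi$ never reach the evidence. Defining the specification's evidence by the same clauses would then build that loss into the specification. Translating conjunction by a join over $X$ (product, renaming, equality selection, projection) keeps both witness sets.
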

\textit{Proof idea.} Translate each relational stage by Theorem~\ref{thm:relational}. Implement scalar derivations by \op{SHAPE}, grouping and folds by \op{AGGREGATE}, time/order frames by \op{WINDOW}, ranking by \op{RANK}, and fixed proximity by \op{SIMILAR}. Process stages in dependency order. If earlier stages equal their specifications, the next declared kernel receives the same inputs and returns the same output. Induction completes the construction.

The theorem is deliberately scoped. It does not say that the registry contains every useful statistic, causal model, scientific method, or business question. It says something more checkable: once a question is defined inside the stated class, a finite exact program exists.

\section{Policy owns the method}
Expressiveness and deployment are different problems. Let \(W=\{q_1(\theta),\ldots,q_m(\theta)\}\) be the family of supported analytical request templates, defined independently of implementation. Parameters \(\theta\) are business values such as a site, product, interval, or threshold; they are not code or primitive lists.

At design time, engineers and domain experts author, review, test, and version a program for each supported template. At request time, deterministic policy maps an accepted canonical request \(c\) to one approved template and its allowed bindings:
\begin{equation}\Pi(c)=(j,\theta)\quad\text{or}\quad\text{clarify/reject}.\label{eq:policy}\end{equation}
The language model does not emit SQL, primitive names, primitive order, or a program identifier.
\begin{theorem}[Exact implementation of a supported policy family]\label{thm:policy}
If every template in \(W\) belongs to the analytical class of Theorem~\ref{thm:analytical}, then pre-written programs and deterministic policy can execute every correctly bound member of \(W\) exactly, without model-selected composition.
\end{theorem}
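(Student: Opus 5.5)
The argument is a direct reduction to Theorem~\ref{thm:analytical}, carried out once per template at design time, followed by an argument that request-time behaviour is a deterministic function of the canonical request. First I make precise what ``correctly bound member'' means. A canonical request \(c\) is correctly bound when \(\Pi(c)=(j,\theta)\), where \(j\) is the intended template and \(\theta\) lies in the declared admissible parameter domain \(\Theta_j\) of \(q_j\). Correctness of the interpretation step is therefore an explicit hypothesis, not something the theorem proves.

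\textbf{Step 1: one program per template.} Fix \(j\). I treat \(\theta\) as a finite tuple of typed constants entering \(q_j\) only through declared parameter positions: selection constants, window bounds, thresholds, and ranking limits. There are two options. The first is to read each \(q_j(\theta)\), for fixed \(\theta\), as a finite acyclic specification in the analytical class. The second is to regard \(\theta\) as a one-row input relation \(P_\theta\) joined via \op{RELATE} and referenced by \op{RESTRICT}, \op{WINDOW}, and so on. I will use the second. It lets a single program \(P_j\) serve all \(\theta\in\Theta_j\): \(q_j\) becomes one specification over the extended database \(D\cup\{P_\theta\}\), and Theorem~\ref{thm:analytical} yields one finite well-typed program \(P_j\). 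Its value, evidence, or domain error then agrees with \(q_j(\theta)(D)\) for every admissible \(D\) and \(\theta\). I expect this step to be the main obstacle. I must check that parameterization stays inside the class. Scalar parameters used in comparisons are fine, since they are just atoms against a constant relation. Parameters that change structure, such as a variable top-\(k\) or a choice of grain, must either be declared kernel arguments or be split into finitely many templates. If a parameter selects among finitely many structural variants, I would instead build one program per variant and let \(\Pi\) choose among them. This keeps determinism, because the set of variants is finite and fixed at design time.

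\textbf{Step 2: the catalog.} Since \(W\) is finite, the catalog \(\{P_1,\ldots,P_m\}\) is finite. It is authored, reviewed, and versioned before deployment. No step of this construction consults a model.

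\textbf{Step 3: request-time execution.} Define the executor by \(E(c,D)=P_j(D,P_\theta)\) when \(\Pi(c)=(j,\theta)\), and clarify/reject otherwise. The model contributes only \(c\). The pair \((j,\theta)\) is computed by the deterministic function \(\Pi\), and \(P_j\) is fixed code, so the primitive composition executed is determined by \(j\) alone. This is the precise sense of ``without model-selected composition.'' Exactness follows by composing the two facts: for a correctly bound \(c\), \(E(c,D)=P_j(D,P_\theta)=q_j(\theta)(D)\) by Step~1, including evidence and declared errors.

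Finally, I would note what the proof does not claim. It says nothing about requests that \(\Pi\) rejects, nor about a misinterpretation that produces a wrong but admissible \(c\). Both lie outside the hypothesis, in keeping with the paper's scoping of the result.
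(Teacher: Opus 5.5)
Your proposal is correct and follows the paper's argument, which is to compile each template once via Theorem~\ref{thm:analytical} at design time and let the deterministic policy \(\Pi\) supply only the template index and typed business parameters at request time. Your device of reifying \(\theta\) as a one-row relation \(P_\theta\) makes the paper's ``symbolic template, parameters bound later'' step follow from Theorem~\ref{thm:analytical} exactly as stated; it only requires the parameter values to lie in the declared candidate domains, so that \(D\cup\{P_\theta\}\) is admissible, whereas the paper implicitly relies on the compilation being uniform in the template's constants.
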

The result follows by compiling each symbolic template once and leaving only typed business parameters to bind at request time. A finite deployed catalog is intentionally narrower than the full algebra. If no policy rule covers a request, the system exposes a coverage gap rather than inventing a method.

This gives a useful separation:
\begin{quote}\textbf{The algebra answers ``can this analysis be expressed?'' Policy answers ``is this analysis approved and supported here?''}\end{quote}

\section{What is guaranteed - and what is not}
Fix an accepted canonical request \(c\) and the complete governed state
\begin{equation}\Omega=(D,A,M,\Pi,C,K,N,O),\label{eq:state}\end{equation}
containing the data snapshot \(D\), authorization \(A\), meaning definitions \(M\), policy \(\Pi\), program catalog \(C\), kernels \(K\), numerical/order conventions \(N\), and output contract \(O\).
\begin{theorem}[Evidence-preserving replay]\label{thm:replay}
If policy resolution and every selected kernel are deterministic, and ordering, numerical behavior, domain errors, and serialization follow fixed conventions, then repeated successful executions with the same \(c\) and \(\Omega\) return the same analytical contract and invariant execution certificate.
\end{theorem}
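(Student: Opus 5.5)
The plan is to model a successful execution as a composition of functions of \((c,\Omega)\) and to show, stage by stage, that each stage's output is determined by its inputs. Concretely, I write the pipeline as
\[
c\;\xrightarrow{\;\Pi\;}\;(j,\theta)\;\xrightarrow{\;C\;}\;P_j\;\xrightarrow{\;\text{exec over }D,K,N\;}\;(v,E)\;\xrightarrow{\;O\;}\;R=(m,v,E),
\]
with the certificate assembled from components of \(\Omega\) and the selected program and kernels. The claim then reduces to: a composition of deterministic maps is deterministic, and serialization under a fixed contract is injective on equal inputs. The induction will be over the stages of the program \(P_j\), in the dependency order already used in the proof idea of Theorem~\ref{thm:analytical}.

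\textbf{Step 1 (resolution).} Since \(c\) is accepted and \(\Pi\) is a deterministic function (Equation~\eqref{eq:policy}), both runs obtain the same \((j,\theta)\). Because the runs are assumed successful, neither takes the clarify/reject branch. Authorization \(A\) is a fixed component of \(\Omega\), so any authorization-dependent restriction of \(D\) is identical in both runs. The meaning \(m\) is fixed by \(c\) together with \(M\). The catalog \(C\) is fixed, so \(P_j\) is the same program text with the same typed bindings.

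\textbf{Step 2 (execution, by induction over stages).} I show that each intermediate relation is identical, as a set or as a sequence where order matters, in both runs. Relational stages are set-theoretic functions of their inputs; by Theorem~\ref{thm:relational} their semantics are fixed formulas. Kernel stages (\op{AGGREGATE}, \op{WINDOW}, \op{RANK}, \op{SIMILAR}, derived \op{SHAPE}) are deterministic by hypothesis, and \(K\) fixes their versions. The main obstacle is here. Set-level equality is not enough, because floating-point folds depend on evaluation order, ranking depends on tie-breaking, and evidence lists depend on enumeration order. I will handle this by requiring that \(N\) fixes a canonical order for every multiset input before any order-sensitive fold or serialization. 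The inductive hypothesis therefore has to be strengthened to ``equal up to the canonical representation fixed by \(N\)'', and I must check that each kernel consumes only that canonical representation. Domain errors fall under the same argument: a deterministic kernel on equal inputs raises the same declared error. Successful runs exclude errors at the top level, but internally handled errors still propagate identically.

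\textbf{Step 3 (evidence and contract).} The evidence \(E\) is built from the same stage outputs with role labels fixed by \(P_j\), so it is equal. Serialization under the fixed contract \(O\) and conventions \(N\) then yields byte-identical \(R\). The certificate consists of the snapshot identifier of \(D\), the versions of \(\Pi\), \(P_j\) and \(K\), and the serialization rules. All of these are functions of \((c,\Omega)\), so it is invariant as well. I will note explicitly that wall-clock timestamps or run identifiers must be excluded from the certificate, or placed outside the invariant part; otherwise the claim fails. This also shows why each hypothesis is needed.
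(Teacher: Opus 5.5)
Your proposal is correct and is essentially the paper's argument: deterministic resolution fixes the selected graph and bindings, and induction over the program's dependency graph gives equal node outputs or declared errors from equal predecessor values; canonical ordering and serialization then yield the same contract and certificate, with run identifiers and latency excluded. Two minor points: in Step~3 what you need is that serialization is a well-defined function of the canonical representation (equal inputs give equal bytes), not that it is injective, and the paper states explicitly an assumption your induction only uses implicitly, namely that independent nodes communicate only through declared edges, so legal scheduling differences between runs cannot alter the sinks.
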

The proof is induction over the program dependency graph: the same sources and same deterministic node functions produce the same node outputs; canonical serialization then produces the same final representation. Operational telemetry such as latency and run identifiers is not part of analytical equality.

The guarantee begins after meaning has been accepted. A model can still map the user's words to the wrong meaning. The data can be incomplete. A registered policy can be substantively wrong. Determinism does not turn these into correct answers; it makes the chosen analysis replayable and inspectable.

Approximate kernels require an additional condition. If exact scores \(s_i\) are approximated by \(\widehat s_i\) with \(\max_i|\widehat s_i-s_i|\leq\eta\), threshold membership at \(\tau\) is unchanged whenever every score lies more than \(\eta\) from the threshold. Likewise, top-\(k\) membership is unchanged if the exact gap between positions \(k\) and \(k+1\) exceeds \(2\eta\). Without such margins, small numerical changes can alter the evidence population.

Finally, deterministic tools alone do not make a runtime-planning agent deterministic at the analytical level. If two valid deterministic procedures implement different functions and a model may choose either, component determinism does not guarantee that the same function is selected. Nor does tool availability alone imply that more planning steps will converge to the correct one. Those stronger claims require a specified search process, a progress condition, and a sound acceptance rule.

\section{Empirical demonstration: does runtime planning help?}
The theory above establishes representability and replay under stated assumptions. The experiment asks a narrower practical question: on analyses that are already supported by policy, what happens when the analytical procedure is instead created at runtime?
\subsection{Systems compared}
We used one synthetic governed quality/manufacturing dataset and eleven analytical tasks: counts, grouping including zero-count groups, absence queries, ranking with ties, rates, period change, contribution arithmetic, join multiplicity, exact means, score thresholds, and region grouping. Each task had an independent reference specification and evidence contract.

We compared four setups on the same hardware:
\begin{center}\footnotesize
\begin{tabularx}{\linewidth}{@{}>{\raggedright\arraybackslash}p{1.27in}>{\raggedright\arraybackslash}p{1.40in}Y@{}}
\toprule
\textbf{Setup}&\textbf{Model}&\textbf{How the method is chosen}\\\midrule
Runtime-planning agent&Qwen3-8B&Model inspects data, writes SQL, executes it, may revise, and selects the final procedure.\\
Runtime-planning agent&Ministral-3-8B&Same runtime procedure-creation protocol.\\
Runtime-planning agent&Granite-4.1-8B&Same runtime procedure-creation protocol.\\
Policy-executed analyzer&Qwen3-8B for language only&Qwen emits governed intent; deterministic policy executes the pre-written program.\\
\bottomrule
\end{tabularx}\end{center}
The Qwen comparison is particularly useful because the same model appears on both sides. The architectural difference is not model capability; it is whether the model owns the analytical method.

The benchmark used an NVIDIA RTX PRO 6000 Blackwell Server Edition, BF16 inference, temperature 0, one request at a time, and a tool budget of eight. Exact model revisions and the full harness are retained in the companion artifact.
\subsection{Two panels isolate two questions}
\textbf{Natural-language end to end.} Every setup starts from the same user question. Runtime-planning agents must both interpret the question and construct the SQL procedure. In the policy-executed analyzer, Qwen performs semantic interpretation only, then policy runs the method.

\textbf{Fixed canonical request.} Every setup receives the same already accepted meaning. Runtime-planning agents still construct a procedure. The policy-executed analyzer makes no model call and executes the approved program directly. This panel removes semantic interpretation as a possible explanation for analytical failure.

Each selected runtime procedure was frozen after development-snapshot planning and then evaluated without model assistance on four held-out database variants covering multiplicity/absence, time boundaries and scope, empty populations and zero denominators, and ties/scores/nulls. A procedure therefore had to implement the analytical function, not merely reproduce one visible number.

The primary metric was \textbf{exact contract accuracy}: the final procedure had to match the independent reference value, governed meaning, qualifying records, and evidence roles on all five snapshots. Returning the correct number with the wrong records did not count as exact.

\section{Results}
\subsection{The main result}
\callout{\textbf{In this specific setup, the runtime-planning agents did not perform successfully on the primary metric.} Across 330 agent episodes, 55 returned a final procedure and \textbf{0 of 330} produced an exact answer-and-evidence contract over all five snapshots. The policy-executed analyzer returned \textbf{110 exact contracts in 110 episodes}.}
\begin{table}[htbp]
\centering\scriptsize
\begin{tabularx}{\linewidth}{@{}lYrrrrr@{}}
\toprule
\textbf{Panel}&\textbf{Setup}&\textbf{Final}&\textbf{Exact}&\textbf{Tokens}&\textbf{Retries}&\textbf{Time (s)}\\\midrule
Natural language&Runtime agent: Qwen3-8B&0/55&0/55&6,850&0.64&6.372\\
Natural language&Runtime agent: Ministral-3-8B&10/55&0/55&11,219&0.00&20.029\\
Natural language&Runtime agent: Granite-4.1-8B&0/55&0/55&17,307&0.00&12.815\\
Natural language&\textbf{Policy analyzer: Qwen3-8B}&\textbf{55/55}&\textbf{55/55}&\textbf{1,433}&\textbf{0.00}&\textbf{0.219}\\\midrule
Fixed request&Runtime agent: Qwen3-8B&15/55&0/55&8,019&0.36&10.771\\
Fixed request&Runtime agent: Ministral-3-8B&5/55&0/55&15,252&0.09&23.670\\
Fixed request&Runtime agent: Granite-4.1-8B&25/55&0/55&17,912&2.91&34.003\\
Fixed request&\textbf{Policy execution}&\textbf{55/55}&\textbf{55/55}&\textbf{0}&\textbf{0.00}&\textbf{0.0028}\\
\bottomrule
\end{tabularx}
\caption{Completed 440-episode benchmark. ``Final'' means a final procedure/result was returned, not that it was correct. Tokens are total model input+output tokens. Retries are additional semantic or complete-procedure attempts recorded by the harness. Time is online wall time; model loading, warm-up, and offline held-out scoring are excluded.}\label{tab:results}
\end{table}
\subsection{How to read these numbers}
The strongest observation is not simply that the runtime agents used more tokens. It is that, under this protocol, they did not produce the required analytical contract. Of 330 runtime-planning episodes, 220 ended in rejection, 55 exhausted the tool budget, and 55 returned a final program. None of the returned programs was exact on the five-snapshot suite.

Five episodes did return the correct numerical answer on the visible development snapshot. All five were Ministral repetitions of the simplest count task. The generated query counted the right 17 records on that snapshot, but labeled evidence with the wrong role and used a grouped aggregate that returned no row for an empty population. It therefore matched the number but not the governed analysis. This is why the result contract includes supporting records and edge-case behavior.

The fixed-request panel is particularly informative. There, the intended meaning was supplied to every system, so semantic misunderstanding cannot explain the failures. The runtime agents still had to decide how to implement the analysis and produced \textbf{0 exact contracts in 165 episodes}. Policy execution returned \textbf{55/55} exactly with no model inference.

The same-model Qwen comparison isolates method ownership. In the natural-language panel, the policy analyzer used about \textbf{4.8 times fewer tokens} and \textbf{29 times less online time} than the Qwen runtime agent, while moving from 0/55 to 55/55 exact contracts. In the fixed-request panel, policy execution used \textbf{zero model tokens} and \textbf{0.0028 seconds} on average, compared with 8,019 tokens and 10.771 seconds for Qwen runtime planning.

Low retry counts should not be read as efficient success. Many episodes rejected or exhausted their budget before a correct procedure was selected. At temperature zero, several returned procedures were also repeatable across runs but wrong. Repeatability and correctness are different properties.

\section{Discussion}
The mathematical and empirical results answer different questions. The proofs show that restriction need not cause loss of expressiveness inside the declared analytical class, and that fixed governed state gives replayable result and evidence. The benchmark does not prove those theorems; it illustrates why method ownership can matter operationally.

The benchmark also does not prove that tool-using agents are generally incapable of analysis. A larger model, native function calling, different prompts, more budget, program synthesis with formal verification, or a sound search-and-acceptance procedure could perform better. Runtime planning is appropriate when finding the method is itself the task: open-ended research, method discovery, code generation, or operational planning. The claim here is narrower: for a supported factual analysis whose approved method is already known, asking a model to rediscover the method at request time adds a failure surface that is not needed for expressiveness.

The policy approach has its own cost. Approved programs must be authored, reviewed, tested, versioned, and maintained. Coverage grows deliberately rather than invisibly. The right empirical question for a product is therefore not ``can the algebra express everything?'' but ``what fraction of real user questions map to reviewed programs, and what does it cost to extend that coverage?'' A representative production workload is needed to answer that question.

\section{Conclusion}
A useful enterprise analyzer needs two kinds of flexibility in different places. Human language is flexible, so interpretation may benefit from a language model. Factual measurement should be stable, so the analytical method should be governed when the task is already known.

The central technical result is modest but strong: a compact relational core, extended with explicit analytical kernels, can exactly implement a broad stated class of finite data analyses. Deterministic policy can select pre-written implementations for supported requests without allowing a model to compose primitives at runtime. With fixed governed state, the result and evidence are replayable.

The experiment gives a concrete instance of the architectural trade-off. In this specific 440-episode setup, three runtime-planning agents produced no exact full-suite analytical contracts, while semantic interpretation followed by deterministic policy execution produced 110/110. The result should not be generalized beyond the tested configuration, but it supports a practical design principle:
\begin{center}\textbf{Let the model help determine what the user means. Let policy determine how an approved analysis is performed. Return the evidence with the answer.}\end{center}

\appendix
\section{Proof details}\label{app:proofs}
\subsection{Proof of Theorem \ref{thm:relational}}
For a finite set of named variables \(X\), define
\[
U_X=\prod_{x\in X}\rho_x(A_{\operatorname{sort}(x)}),\qquad U_\varnothing=\mathbf{1}.
\]
Alpha-rename bound variables so every quantifier introduces a fresh name. For \(X\supseteq\fv(\varphi)\), construct a relation \(T_X(\varphi)\) with schema \(X\). For a relational atom, rename source columns to fresh names, multiply the source by \(U_X\), select equalities linking source positions to the atom's variables or constants, and project to \(X\). Atomic comparisons are selections on \(U_X\). Boolean truth and falsehood are \(U_X\) and \(U_X\setminus U_X\). For compound formulas define
\begin{align*}
T_X(\neg\varphi)&=U_X\setminus T_X(\varphi),\\
T_X(\varphi\vee\psi)&=T_X(\varphi)\cup T_X(\psi),\\
T_X(\varphi\wedge\psi)&=T_X(\varphi)\setminus\bigl(T_X(\varphi)\setminus T_X(\psi)\bigr),\\
T_X(\exists y\,\varphi)&=\pi_XT_{X\cup\{y\}}(\varphi),\\
T_X(\forall y\,\varphi)&=U_X\setminus\pi_X\bigl(U_{X\cup\{y\}}\setminus T_{X\cup\{y\}}(\varphi)\bigr).
\end{align*}
We prove by structural induction that
\[
\bar a\in T_X(\varphi)\quad\Longleftrightarrow\quad\bar a\in U_X\text{ and }D,\bar a\models\varphi.
\]
The atom and comparison cases follow directly from the selections. Difference from \(U_X\) implements logical negation, union implements disjunction, and the displayed difference identity implements intersection/conjunction. Projection retains exactly those assignments having a witness for \(y\). For universal quantification, the inner difference identifies counterexamples, projection identifies assignments with at least one counterexample, and the outer difference removes them. This also gives vacuous truth when the candidate domain is empty. Setting \(X=\fv(\varphi)\) proves the forward translation.

For the reverse direction, induct on a relational expression. Base relations become relational atoms; selection adds a comparison; union becomes disjunction; difference becomes conjunction with negation of the right expression; product conjoins formulas over disjoint variable names; projection existentially quantifies removed attributes; and renaming renames variables. Every step preserves tuple membership.
\subsection{Typed closure and replay}
A finite acyclic program of terminating typed kernels terminates by induction over any topological order: each node is invoked only after its typed predecessors have terminated, and there are finitely many nodes. For replay, fixing \(c\) and \(\Omega\) fixes the selected graph, bindings, sources, and kernel state. Equal predecessor values therefore give equal node outputs or declared errors. Independent nodes communicate only through declared edges, so legal scheduling differences cannot alter the sinks. Canonical ordering and scalar encoding then serialize equal analytical outputs and invariant metadata identically.

\end{document}